\newtheorem{example}{Example}
\newtheorem{proposition}{Proposition}
\newtheorem{definition}{Definition}
\newcommand{\OSL}{\textit{OSL}\xspace}
\newcommand{\Animal}{\type{A}nimal}
\newcommand{\Cat}{\type{C}at}
\newcommand{\Dog}{\type{D}og}
\newcommand{\Kind}{\type{K}ind}
\newcommand{\Sound}{\type{S}ound}
\newcommand{\Meow}{\Co{meow}}
\newcommand{\Bark}{\Co{bark}}
\newcommand{\Tom}{\Co{tom}}
\newcommand{\mS}{\Co{makingSound}}
\newcommand{\mother}{\Co{mother}}
\newcommand{\father}{\Co{father}}
\newcommand{\Lalt}{$\mid$\xspace}
\title{Order-Sorted Intensional Logic: Expressing Subtyping Polymorphism with Typing Assertions and Quantification over Concepts\thanks{This work was partially supported by the Flemish Government under the ``Onderzoeksprogramma Artificiële Intelligentie (AI) Vlaanderen''.}}
\author{\DJ or\dj e Markovi\'c \qquad\qquad Marc Denecker
\institute{KU Leuven, Department of computer science, Leuven, Belgium}
\email{\quad dorde.markovic@kuleuven.be \quad\qquad marc.denecker@kuleuven.be}
}
\begin{document}
\maketitle

\begin{abstract}
	Subtyping, also known as subtype polymorphism, is a concept extensively studied in programming language theory, delineating the substitutability relation among datatypes. This property ensures that programs designed for supertype objects remain compatible with their subtypes.
	
    In this paper, we explore the capability of order-sorted logic for utilizing these ideas in the context of Knowledge Representation. We recognize two fundamental limitations: First, the inability of this logic to address the concept  rather than the value  of non-logical symbols, and second, the lack of language constructs for constraining the type of terms. Consequently, we propose guarded order-sorted intensional logic, where guards are language constructs for annotating typing information and intensional logic provides support for quantification over concepts.
\end{abstract}

\section{Introduction}
The logic-based approach to knowledge representation (KR) dates back to the early ages of artificial intelligence. From the inception of this approach, limitations of untyped logic were identified. 
These issues led to the use of \emph{many-sorted logic}~\cite{DBLP:journals/jsyml/Wang52}, and \emph{order-sorted logic} (\OSL)~\cite{DBLP:journals/ai/BeierleHPSS92}. 
In \emph{many-sorted logic}, the domain of discourse (or universe) is partitioned into different sorts/types, all disjoint. 
The latter assumption is lifted in \emph{order-sorted logic}, and sorts/types can be organized in a hierarchy by inclusion.

When extending first-order logic with ordered sorts, the concept of \emph{subtyping polymorphism} emerges \cite[Chapter 15]{pierce2002types}.
A prime example of this concept is the modeling of characteristic behaviors among different animals. 
Dogs bark, cats meow, etc., while nearly all animals produce species-specific sounds.
In this scenario, \textit{animal} serves as the overarching type, with specific animal types acting as subtypes.
In many programming languages, one can invoke a method such as \textit{produce sound} for an animal, which \emph{dynamically dispatches} behavior based on the specific species of the animal.
Logic is characterized by model semantics, and hence, it lacks the notion of method invocation found in programming languages. Nonetheless, logical statements can draw inspiration from this concept. For instance, consider the statement: \textit{``There is an animal in my yard that is either barking or meowing''}. 
Considering that \textit{barking} and \textit{meowing} are predicates defined respectively on types \textit{dog} and \textit{cat} which are subtypes of the type \textit{animal}, expressing such statements in \OSL may easily lead to untyped formulae, as we shall see later.

In this paper, we explore the principles underlying subtyping polymorphism and highlight challenges in its representation within order-sorted first-order logic.
Additionally, we identify the two key principles essential for naturally expressing such concepts in any logic employing order-sorts.
The first principle relates to the inherent incapacity of standard \OSL to condition the subtyping relation of a term. 
For instance, given a variable $x$ ranging over type \textit{animal}, it is impossible\footnote{Here we strictly talk about the incapability to constrain the subtype of a variable, and not about possible alternative modelings that would circumvent this issue by changing the ontology.} to express \textit{``if $x$ is of type dog then $x$ is barking''}. 
Furthermore, we show that making such typing assertions implicit (annotated) is essential for subtyping. 
Whereby, annotating aims to implicitly constrain the type of a variable to the type of the argument it occurs at, given that the type of the variable is a supertype of the argument.
For example, a language can be extended such that the statement \textit{``$\langle\langle x$ is barking$\rangle\rangle$''} stands for \textit{``if $x$ is of type dog then $x$ is barking''}. 
This is possible because predicate \textit{barking} caries the typing information that an argument of type \textit{dog} is expected. 
The second principle tackles the constraint of first-order logic to solely address the values (extensions) of non-logical symbols rather than the concepts (intensions) interpreting them and the constraint to quantify over these concepts. 
Principles of intensional logic (\cite{DBLP:journals/apal/Fitting04}) become crucial in overcoming these limitations.
We demonstrate that by combining order-sorted logic with principles of intensional logic and introducing the innovative principle of implicit type conditions, we establish a novel language suitable for expressing subtyping polymorphism.

The remainder of the paper is structured as follows: (\ref{sec:preliminaries})~Order-sorted logic preliminaries; (\ref{sec:log-anaysis})~Analysis of subtyping polymorphism from the logic perspective;  (\ref{sec:guarded-osl})~Introduction of guarded and (\ref{sec:intensional-osl})~intensional \OSL; (\ref{sec:results})~Presentation of results: Guarded order-sorted intensional logic; (\ref{sec:osl-i-typeing})~Discussion on well-typedness in order-sorted intensional logic; (\ref{sec:semantics})~Semantics of the language; (\ref{sec:related-work})~Discussion of related work; (\ref{sec:conclusion})~Conclusion.

\section{Preliminaries -- Order-sorted Logic}
\label{sec:preliminaries}
This section formally defines \emph{order-sorted logic}. We start with the notion of a vocabulary.
\begin{definition}
	\label{def:osl-vocabulary}
	An \textbf{\OSL vocabulary} $\Od$ of non-logical symbols is a quadruple ($\OTd$, $\OSd$, $\OSTd$, $\Omd$) where:
	\begin{itemize}[leftmargin=2.5em]
		\item $\OTd$ is a set of \textbf{type symbols} $\Td$. Type symbols $\univ$ (universe), $\bool$ (boolean), and $\nat$ (natural numbers) are always member of $\OTd$.
        
        \item $\OSd$ is a set of \textbf{function and predicate} symbols.
		
        \item $\OSTd$ is a \textbf{subtyping} relation on $\OTd$. Type $\type{S}$ is a \textbf{direct subtype} of $\Td$ if $\type{S} \OSTd \Td$. For each type $\Td$ (except $\univ$) without  direct supertype  declaration  $\Td \OSTd \type{S}$, we implicitly assume  $\Td \OSTd \univ$. Accordingly, $\bool \OSTd \univ$ and $\nat \OSTd \univ$.
		
		\item $\Omd$ is a \textbf{type signature} associating to every symbol in $\OSd$ a word of the following format $\atypesign{T}{n}{\Td}$ (i.e., type term). 
		If $\Td = \bool$, the symbol is a predicate symbol, otherwise it is a function symbol. The sets of predicate and  function symbols are  denoted with $\OSd^p$, respectively   $\OSd^f$.
	\end{itemize}
    Type $\Td_1$ is called a \textbf{subtype} of $\Td_2$ if there is a path from $\Td_1$ to $\Td_2$ in the relation $\OSTd$.
\end{definition}

\begin{proposition}
    \label{prop:universe}
    Given vocabulary $\Od$, every type $\Td$ in $\OTd$ (except $\univ$) is subtype of $\univ$.
\end{proposition}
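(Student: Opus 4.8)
The plan is to read the claim as a reachability statement in the direct-subtype graph: for every $\Td \in \OTd$ with $\Td \neq \univ$ there is a path from $\Td$ to $\univ$ in $\OSTd$, and to establish it by induction along the chains of declared direct-subtype edges. The whole argument turns on the implicit completion rule of Definition~\ref{def:osl-vocabulary}, which guarantees that any non-universe type lacking a declared direct supertype is automatically given the edge $\Td \OSTd \univ$.

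First I would fix an arbitrary $\Td \neq \univ$ and split into two cases according to whether $\Td$ has a declared direct supertype. In the base case, there is no declaration $\Td \OSTd \type{S}$; then the implicit assumption yields $\Td \OSTd \univ$ directly, a path of length one. In the inductive case, $\Td$ has some declared direct supertype $\type{S}$, i.e.\ $\Td \OSTd \type{S}$. If $\type{S} = \univ$ we are already done; otherwise the induction hypothesis supplies a path from $\type{S}$ to $\univ$, and prepending the edge $\Td \OSTd \type{S}$ produces the required path from $\Td$ to $\univ$.

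The step I expect to be the main obstacle is justifying that the induction is actually well-founded, i.e.\ that chasing declared supertypes upward cannot go on forever without ever reaching $\univ$. Definition~\ref{def:osl-vocabulary} does not spell this out, so before starting the induction I would make the standard hypothesis on order-sorted hierarchies explicit: either finiteness of $\OTd$ together with acyclicity of $\OSTd$, or, equivalently, well-foundedness of the ascending direct-subtype relation, phrasing the induction on the length of the longest ascending chain of declared edges out of $\Td$. This hypothesis is precisely what is needed, since if $\OSTd$ admitted a cycle avoiding $\univ$, every type on that cycle would carry a declared supertype, the implicit rule would never fire, and the claim would fail. Hence I would state the acyclicity/well-foundedness assumption up front and then run the two-case induction above.
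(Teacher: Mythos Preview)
Your argument is correct and, in fact, considerably more careful than the paper's. The paper's entire proof is the single sentence ``Follows directly from Definition~\ref{def:osl-vocabulary}.'' It does not spell out the inductive chase along declared supertypes, nor does it acknowledge the well-foundedness issue you raise.

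Your observation about well-foundedness is a genuine point the paper glosses over: as you note, if $\OSTd$ admitted a cycle not touching $\univ$ (say $\type{A} \OSTd \type{B}$ and $\type{B} \OSTd \type{A}$), then every type on that cycle would have a declared direct supertype, the implicit edge to $\univ$ would never be added, and the proposition would fail. The paper implicitly relies on the usual assumption that the user-declared hierarchy is finite and acyclic, but never states it. So your proposal is not merely a different route; it is a strictly more rigorous version of what the paper intends, and your explicit flagging of the needed acyclicity hypothesis is a worthwhile addition rather than an obstacle.
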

\begin{proof}
    Follows directly from Definition \ref{def:osl-vocabulary}.
\end{proof}

A symbol with type term $() \rightarrow \Td$ is an object (or a constant function) symbol. 
A symbol with type term $() \rightarrow \bool$ is a propositional  symbol. 

\begin{example}
	\label{eg:voc}
	The following vocabulary declares: types $\Animal$ (Animal), $\Cat$ (Cat), and $\Dog$ (Dog), where cat and dog are subtypes of animal; function $\Co{age}$ mapping animals to natural numbers; constant $\Tom$ of type cat; and two predicates $\Bark$ and $\Meow$ denoting sets of dogs and cats respectively.
	\begin{gather*}
		\VType \Animal \qquad \VType \Cat <: \Animal \qquad \VType \Dog <: \Animal \\
		\Co{age} : \Animal \rightarrow \nat \qquad \Tom : () \rightarrow \Cat \qquad \Bark : \Dog \rightarrow \bool \qquad \Meow : \Cat \rightarrow \bool
	\end{gather*}
\end{example}

The following defines \OSL terms, formulae, expressions, and sentences.
\begin{definition}
	\label{def:osl-syntax}
	Given an infinite set $X$ of variable symbols and \OSL vocabulary $\Od$, an \OSL \textbf{term} ($\tau$) and \textbf{formula} ($\phi$) over $X$ and $\Od$ are defined inductively (using BNF):
	\begin{itemize}[leftmargin=2.5em]
		\item A \textbf{term} is a variable or a function term: 
		\begin{grammar}
			<\Term> ::= <\Var> \Lalt <\Function>(<\Term>,..., <\Term>) \text{ where $x \in X, f \in \OSd^f$}
		\end{grammar}
		\item A \textbf{formula} is true or false, an atomic formula, a negation, a disjunction, or an existential quantification:
		\begin{grammar}
			<\Formula> ::= \Tr \ \Lalt \Fa \ \Lalt <\Predicate>(<\Term>,...,<\Term>) \Lalt \Neg <\Formula> \Lalt <\Formula> \Or <\Formula> \Lalt \Exists <\Var> [<\Type>] : <\Formula>\\
            \text{ where $x \in X, p \in \OSd^p, \Td \in \OTd$}
		\end{grammar}
		\item  An \OSL \textbf{expression} ($\alpha$) is either an \OSL \emph{term} or an \OSL \emph{formula}. A formula with no free variables (all variables in the formula are quantified) is a \textbf{sentence}.
	\end{itemize}
\end{definition}
Other familiar connectives, $\land$, $\Rightarrow$, $\forall$, and $\Leftrightarrow$ can be defined in the standard way as shortcuts in terms of the basic ones. Furthermore, we assume the language is equipped with the standard set of predicates and functions, i.e., $=$ for each type and standard arithmetic operations ($+$, $-$, $\times$, \dots) on natural numbers.
\begin{example}
	An example of a term: $\Co{age}(\Tom)$; formula: $\Co{age}(x) = 15$; and sentence: $\exists a[\Animal]:\Co{age}(a) = 15$.
\end{example}

Sentence $\Bark(\Tom)$ is well-formed according to Definition \ref{def:osl-syntax}.
However, the typing information shows that it is senseless as cats cannot bark.
For that reason, it is customary to define a syntactic subclass of Definition \ref{def:osl-syntax} that avoids such category clashes. These are the well-typed formulae.
\begin{definition}
	\label{def:osl-well-typed}
	Given an infinite set $X$ of variable symbols and an \OSL vocabulary $\Od$, a \textbf{typing context} $\omega$ is a set of typing annotations of the format $s:t$ where $s$ is a symbol from $X \cup \OSd$ and $t$ is a type term over $\OTd$. 
	A \textbf{typing relation} $\TypDc{\alpha}:\Td$, meaning that expression $\alpha$ is of type $\Td$ in the context $\omega$, is defined by the following inductive definition:
    \begin{mathpar}
        \inferrule*[right=\emph{(T-tr)},leftskip=1em,rightskip=-1em]{\vphantom{X_x}}{\TypDc{\Tr} : \bool}\quad
        \inferrule*[right=\emph{(T-fa)},rightskip=-1em]{\vphantom{X_x}}{\TypDc{\Fa} : \bool}\quad
        \inferrule*[Right=\emph{(T-or)},rightskip=-1em]{\TypDc{\phi : \bool} \\ \TypDc{\varphi : \bool}}{\TypDc{(\phi \lor \varphi) : \bool}}
        \\
        \inferrule*[right=\emph{(T-neg)},leftskip=-1em,rightskip=-1em]{\TypDc{\phi : \bool}}{\TypDc{\neg \phi : \bool}}
        \inferrule*[right=\emph{(T-ex)},rightskip=-1em]{\Typ{\omega \cup \{x : \Td\}}{\phi : \bool}}{\TypDc{(\exists x [\Td] : \phi) : \bool}}
        \inferrule*[right=\emph{(T-sub)}]{\TypDc{t : \mathbb{S}} \\ \mathbb{S} \OSTd \Td}{\TypDc{t : \Td}}
        \\
        \inferrule*[right=\emph{(T-var)},leftskip=1em,rightskip=-1em]{x:\Td \in \Tc}{\TypDc{x} : \Td}\quad
        \inferrule*[Right=\emph{(T-app)},leftskip=1em]{s : \Td_1 \times \dots \times \Td_n \rightarrow \Td \in \omega \\ \TypDc{t_1}:\Td_1 \ \dots \ \TypDc{t_n}:\Td_n}{\TypDc{s(t_1,\dots,t_n)} : \Td}
    \end{mathpar}
 	An \OSL expression $\alpha$ over \OSL vocabulary $\Od$ with free variables $x_1, \dots, x_n$ is \textbf{well-typed} iff there are types $\Td_1,\dots, \Td_n, \Td \in \OTd$ such that $\omega = \{s:t \mid ( s \in \OSd \text{ and } \Omd(s) = t ) \text{ or } (s \text{ is } x_i \text{ and } t \text{ is } \Td_i \text{ for } i \in 1\dots n)\}$ and $\TypDc{\alpha}:\Td$. An \OSL sentence $\psi$ is well-typed iff $\TypDc{\psi}:\bool$ where $\omega = \{s:t \mid s \in \OSd \text{ and } \Omd(s) = t \}$ (as $\psi$ has no free variables).
\end{definition}
The rules in this definition (a.k.a. typing judgments) define the type of an expression in a context (below the line) given that certain conditions (above the line) are satisfied.
The specific rules are existential quantification (T-\textsc{ex}) which introduces new typing annotation to the context\footnote{Alternatively, one could say that this rule projects away typing information, depending on whether the rule is interpreted top-down or bottom-up.}, and subtyping rule (T-\textsc{sub}) which expresses that the term of type $\mathbb{S}$ can be seen as of type $\Td$ if it holds that $\Td$ is a supertype of $\mathbb{S}$. 
For an \OSL sentence to be well-typed, the context initially has to correspond to the type signature of the function and predicate symbols from the vocabulary ($\Omd$).
Given the vocabulary from Example \ref{eg:voc}, the formula $\Bark(\Tom)$ is ill-typed (i.e., not well-typed) because predicate $\Bark$ expects argument of type $\Dog$ and $\Tom$ is of type $\Cat$. In general and informally, a formula is well-typed if the type of each expression occurring as an argument to a function/predicate symbol is a subtype or of the same type as the type of that argument.

\section{Analysis of subtyping polymorphism}
\label{sec:log-anaysis}
As previously noted, the statement $\Bark(\Tom)$ is considered unacceptable (ill-typed) due to the category clash it contains. 
Specifically, barking does not apply to cats.
One might argue that such statement could be accepted if always interpreted as false, thereby justifying its meaning as ``Tom is a dog and $\Bark(\Tom)$''. 
Since Tom is not a dog, the statement is false.
But what then is the meaning of $\neg \Bark(\Tom)$?
If it is interpreted as ``Tom is a dog and $\neg \Bark(\Tom)$'', then this formula is false, violating the law of excluded middle.
An alternative interpretation is ``$\neg($Tom is a dog and $\Bark(\Tom))$'', in which case the formula is true, which seems to be a more reasonable choice in this case.
However, notice that this statement is equivalent to ``If Tom is a dog then $\neg\Bark(\Tom)$''.

This brings us to an alternative interpretation of ill-typed formulae. 
One could argue that the initial formula $\Bark(\Tom)$ should be interpreted as ``If Tom is a dog then $\Bark(\Tom)$''. 
Consequently, it is justified to assert that statement $\Bark(\Tom)$ carries ambiguity, and hence can be considered as potentially dangerous, and therefore should be rejected (corresponding to a well-typed criterion).
However, we argue that extending \OSL language to support explicitly disambiguated forms of these ill-typed formulae is beneficial. We demonstrate this in the remainder of the section.

Consider the definition of the predicate $\mS : \Animal \rightarrow \bool$ representing the set of all animals producing their specific sound. 
In the running example cats and dogs. This can be formalized in \OSL as:
\begin{equation}
	\label{eq:osl-guarding}
	\begin{gathered}
		\forall a[\Animal] : \mS(a) \Leftrightarrow 
        \left(\begin{array}{cc}
            (\exists c[\Cat] : a \stackrel{_{\Animal-\Cat}}{=\joinrel=} c \land \Meow(c)) \lor\\
            (\exists d[\Dog] : a \stackrel{_{\Animal-\Dog}}{=\joinrel=} d \land \Bark(d))
        \end{array}\right)
	\end{gathered}
\end{equation}
Note that equalities $\stackrel{_{\Animal-\Cat}}{=\joinrel=}$ and $\stackrel{_{\Animal-\Dog}}{=\joinrel=}$ are necessary since they operate on different types. Returning to the main point, in this example, it would be beneficial to constrain the type of variable $a$ which ranges over type $\Animal$ in the following way.
\begin{equation}
\label{eq:conjunction-guarding}
\begin{gathered}
	\forall a[\Animal] : \mS(a) \Leftrightarrow \bigl((\Cat(a) \land \Meow(a)) \lor (\Dog(a) \land \Bark(a))\bigr)
\end{gathered}
\end{equation}
Similarly, the statement ``all animals produce their specific sound'' could be expressed as:
\begin{equation}
	\label{eq:implication-guarding}
    \forall a[\Animal] : \bigl((\Cat(a) \Rightarrow \Meow(a)) \land (\Dog(a) \Rightarrow \Bark(a))\bigr)
\end{equation}
However, these do not constitute \OSL formulae as types are used as predicates and variable $a$ of type $\Animal$ remains an argument of predicates $\Meow$ and $\Bark$.
Notice that there is room for improvement in the statements above. 
Specifically, capability to talk about ``sounds specific'' for an animal would enhance the expressivity of the language. 

Accordingly, the first goal of this paper is to extend order-sorted logic by introducing new language constructs (\emph{guards}) as motivated in this section. 
The next step is to make these guards implicit, so it is possible to express $\Cat(a) \Rightarrow \Meow(a)$ as $\Gimp{\Meow(a)}$.
Finally, to be able to talk about ``sounds specific'' for an animal the language needs to be extended with the intensional logic.
These are presented in Section~\ref{sec:guarded-osl} and~\ref{sec:intensional-osl}.

\section{Guarded order-sorted logic}
\label{sec:guarded-osl}
The extension of \OSL with the concept of guarding terms by typing assertions is characterized in the following definition.

\begin{definition}
	\label{def:guarder-osl-vocabulary}
    Definition \ref{def:osl-vocabulary} of an \OSL vocabulary $\Od$ is extended with the following rule: if $\Td$ is a type symbols in $\OTd$, then $\Tp{\Td} \in \OSd$ and $\Omd(\Tp{\Td}) = \univ \rightarrow \bool$.
    
	Definition \ref{def:osl-well-typed} of an \OSL typing relation, is extended with the two new rules, namely conjunction guarding (G-c) and implication guarding (G-i): 
    \begin{mathpar}
        \inferrule*[right=\emph{(G-c)}]{\Tc \mathop{\Ty}_{i=1}^{n} t_i : \univ \quad \Typ{\omega \mathop{\cup}_{i=1}^{n} \{t_i : \Td_i\}}{\phi : \bool}}{\TypDc{(\Tp{\Td_1}(t_1) \land \dots \land \Tp{\Td_n}(t_n) \land \phi) : \bool}} \ \qquad
        \inferrule*[right=\emph{(G-i)}]{\Tc \mathop{\Ty}_{i=1}^{n} t_i : \univ \quad \Typ{\omega \mathop{\cup}_{i=1}^{n} \{t_i : \Td_i\}}{\phi : \bool}}{\TypDc{(\Tp{\Td_1}(t_1) \land \dots \land \Tp{\Td_n}(t_n) \Rightarrow \phi) : \bool}}
    \end{mathpar}
\end{definition}

\begin{example}
	\label{eg:d-guard}
	In the guarded \OSL, statement \textit{``There is an animal (that is a cat) meowing!''} can be expressed as: $\exists a[\Animal] : \Tp{\Cat}(a) \land \Meow(a)$.
    Towards making the judgment that this formula is well-typed (i.e., of type $\bool$), let the context $\Tc$ correspond to the typing signature of vocabulary from Example~\ref{eg:voc}: 
    \begin{equation*}
    	\Tc = \{\Tp{\Cat} : \univ \rightarrow \bool;\ \Tp{\Dog} : \univ \rightarrow \bool;\ \Co{age} : \univ \rightarrow \nat; \dots\ \Meow : \Cat \rightarrow \bool\}
    \end{equation*}
    For compact representation of derivation we use the following abbreviations:
    \begin{equation*}
    	\Tc' = \Tc \cup \{a:\Animal\} \qquad \qquad \Tc'' = \Tc' \cup \{a:\Cat\}
    \end{equation*}
    The following derivation provides the judgment that this formula is well-typed:
    \begin{mathpar}
    	\inferrule* [Right=\emph{T-ex}]
    	{\inferrule* [Right=\emph{G-c}]
    		{
    		\inferrule* [Left=\emph{T-sub}]
    			{
    				\inferrule*{\checkmark_1}{\Tc' \Ty a : \Animal} \quad
    				\inferrule*{\checkmark_2}{\Animal \OSTd \univ}
    			}
    			{\Tc' \Ty a : \univ} \qquad
    		\inferrule* [Right=\emph{T-app}]
    			{
       			  \inferrule*{\checkmark_3}{\Meow : \Cat \rightarrow \bool \in \Tc''} \quad
    			    \inferrule*{\checkmark_4}{a : \Cat \in \Tc''}
    			}
    			{\Tc'' \Ty \Meow(a) : \bool}
    		}
    		{\Tc' \Ty \Tp{\Cat}(a) \land \Meow(a) : \bool}
    	}
    	{\TypDc{\exists a[\Animal] : \Tp{\Cat}(a) \land \Meow(a) : \bool}}
    \end{mathpar}
    The justification for each of the final premises ($\checkmark$) is:
    \begin{description}
        \item[$\checkmark_1$] $a : \Animal \in \Tc'$ since $\Tc' = \Tc \cup \{a:\Animal\}$.
    	\item[$\checkmark_2$] Since $\Animal$ has no supertype in $\Od$, it follows that $\Animal \OSTd \univ$ (Definition~\ref{def:osl-vocabulary}).
    	\item[$\checkmark_3$] $\Meow : \Cat \rightarrow \bool \in \Tc''$ since it is in $\Tc$ and $\Tc'' = \Tc \cup \{a:\Animal\} \cup \{a:\Cat\}$.
    	\item[$\checkmark_4$] $a : \Cat \in \Tc''$ since $\Tc'' = \Tc' \cup \{a:\Cat\}$ (due to Definition~\ref{def:guarder-osl-vocabulary}, rule \emph{(G-\textsc{c})}).
    \end{description}
\end{example}

Further, it is possible to make these typing annotations implicit by introducing new language constructs.
\begin{definition}
	\label{def:implicit-guards}
	Let $\psi$ be an \OSL formula, $\omega$ a typing context, and $\{t_1, \dots, t_n\}$ terms in $\psi$ (over \OSL vocabulary $\Od$) such that: (1) $\TypDc{t_i : \Td_i}$; (2) $t_i$ occurs in $\psi$ as an argument of predicate/function that expects argument of type $\mathbb{S}_i$; (3) $\mathbb{S}_i <:_{\Od} \Td_i$; then:
	\begin{equation*}
		\begin{split}
			\Gand{\psi} & \quad \text{ stands for } \quad \Tp{\type{S}_1}(t_1) \land \dots \land \Tp{\type{S}_n}(t_n) \land \psi\\
			\Gimp{\psi} & \quad \text{ stands for } \quad \Tp{\type{S}_1}(t_1) \land \dots \land \Tp{\type{S}_n}(t_n) \Rightarrow \psi
		\end{split}
	\end{equation*}
\end{definition}

\begin{example}
    Employing implicit guarding, Example \ref{eg:d-guard} becomes: $\exists a[\Animal] : \Gand{\Meow(a)}$.
\end{example}

\section{Order-sorted intensional logic}
\label{sec:intensional-osl}
The main concern of intensional logic is the difference between a concept (or intension)  and,  its value (or extension) in a state of affairs. A prototypical example is the ``morning star'' and ``evening star'', which represent distinct concepts (respectively, the star  in the east before sunrise, and  the star in the west after sunset),   while denoting the same object in the actual state of affairs  (the planet Venus). In the computational intensional logic of \cite{DBLP:journals/corr/abs-2202-00898}, intensions of predicates are first class objects that can be quantified over and stored in other predicates. 
For example, given a predicate $\Co{humanDisease}$ containing a set of intensions of unary predicates over humans (e.g., $\Co{flu}, \Co{measels}, \dots$) and type $\con$ representing all concepts, one can define $\Co{healtyHuman}$ as:
\begin{equation*}
\forall h [\type{H}uman]: (\Co{healthyHuman}(h) \Leftrightarrow \neg \exists c [\con]: \Co{humanDisease}(c) \land \$(c)(h))
\end{equation*}
Here $\$(c)$ is the value of the intensional object $c$. 
A similar approach can be applied to improve the formula~(\ref{eq:conjunction-guarding}) from Section~\ref{sec:log-anaysis}; here $\Co{sound}$ is a unary predicate over animal sound intensions (in the running example $\Meow$ and $\Bark$):
\begin{equation}
\label{eq:informal-intensional}
\forall a[\Animal]: \mS(a) \Leftrightarrow \exists c[\con] : sound(c) \land \$(c)(a)    
\end{equation}
However, this formula has a typing issue since sound concepts (variable $c$) can not be applied to an arbitrary animal (variable $a$), which is done by $\$(c)(a)$.
This issue will be addressed after we formally introduce ordered-sorted intensional logic.
First, a new built-in type $\con$  representing the set of concepts of the vocabulary is added to the \OSL vocabulary. This type represents the collection of all symbols (types, functions, and predicates) within the vocabulary. 
The concept associated with a symbol $s$ is denoted by $\simV{s}$ and can be accessed with the reference operator $`(s)$. 
The dual dereference operator $\$(\simV{s})$ is a unary higher-order function that, given a concept $\simV{s}$, returns the function or predicate associated with the symbol $s$.
Therefore, $\$(\simV{s})$ is always followed by another bracket containing a tuple of terms that are applied to the resulting function or predicate. Accordingly, these terms should match the type and arity of the symbol.
Formally:
\begin{definition}
	\label{def:intensional}
	The order-sorted intensional logic is defined by the following extensions:
    \begin{enumerate}[leftmargin=2.5em]
        \item An \OSL vocabulary $\Od$ contains the build-in type $\con$ (concepts).
        \item Type $\con$ denotes the set of all concepts in the vocabulary $\{\simV{s} \mid s \in \OSd \cup \OTd\}$.
        \item Given an \OSL vocabulary $\Od$, for $s \in \OSd \cup \OTd$, $`(s)$ is a term of type $\con$.
        \item If term $c$ is of type $\con$ then $\$(c)(\bar{t})$ is an \OSL expression, where $\bar{t}$ is a tuple of terms.
    \end{enumerate}
\end{definition}
\begin{example}
	\label{eg:voc-2}
	In the running example, type $\con$ denotes the set $\{\simV{\bool},$ $\simV{\nat},$ $\simV{\Animal},$ $\simV{\Cat},$ $\simV{\Dog},$ $\simV{\Co{age}},$ $\simV{\Tom},$ $\simV{\Bark},$ $\simV{\Meow}\}$. 
    Type $\Sound$ (sounds) of animals can be declared as:
    \begin{equation*}
        \VType \Sound <: \con := \{`(\Meow), `(\Bark)\}
	\end{equation*}
	Notation $\Td := \{\dots\}$ declares extension of type $\Td$.
	Term $`(\Meow)$ denotes the concept $\simV{\Meow}$.
	An example of a formula is: $\Meow(\$(`\Tom)())$, which is the same as: $\Meow(\Tom)$.
\end{example}

Consider the statement $\Bark(\$(`\Tom)())$. 
It is a well-formed formula according to the Definition~\ref{def:intensional}. It expresses that the extension of the intension of $\Tom$ is barking, which is a complex way to say that $\Tom$ is barking, i.e., $\Bark(\Tom)$. The utility of this sort of expression will become apparent only in a few paragraphs. However, here is important to notice that this statement is not well-typed, as $\Tom$ is of type $\Cat$ and $\Bark$ is a predicate expecting an argument of type $\Dog$. Furthermore, the Definition \ref{def:osl-well-typed} (well-typed formulae), does not account for intensional logic. The criteria for well-formedness and well-typedness of a formula becomes challenging in intensional logic. This is because these properties become dependent on the extensions of types and other symbols (for more details see Section~\ref{sec:osl-i-typeing}). For this paper, it suffices to reinstate these criteria by verifying the \emph{grounded} version of a formula. Grounding a variable in a formula involves substituting it with individuals from the domain of its type.  Additionally, intensional terms of the form $`(s)$ are grounded to $\simV{s}$ and intensional application $\$(\simV{s})(\bar{t})$ to $s(\bar{t})$ (here $s$ is a symbol form a vocabulary). We demonstrate this idea on the following example.
\begin{example}
	\label{eg:grounding}
    Consider the following formalization (using the type $\Sound$) of the statement from formula~(\ref{eq:informal-intensional}): ``An animal is making sound iff there is a sound it is producing''.
	\begin{gather*}
		\forall a[\Animal]: \mS(a) \Leftrightarrow \exists s[\Sound] : \$(s)(a)
	\end{gather*}
	Grounding quantification over $\Sound$ results in:
	\begin{gather*}
		\forall a[\Animal]: \mS(a) \Leftrightarrow \$(\simV{\Meow})(a) \lor \$(\simV{\Bark})(a).
	\end{gather*}
	Eliminating intensional terms results in:
	\begin{gather*}
		\forall a[\Animal]: \mS(a) \Leftrightarrow
		\Meow(a) \lor \Bark(a).
	\end{gather*}
	The grounded formula is not well-typed as variable $a$ of type $\Animal$ occurs as an argument of type $\Cat$ and $\Dog$. Therefore we conclude that the initial formula is ill-typed.
\end{example}
Restoring the well-typedness of this formula necessitates guarding of the expression $\$(s)(a)$.
Guarding this expression is challenging due to its intensional nature (i.e., variable $s$ ranges over sounds).
Consequently, the expression $\$(s)(a)$ has to be guarded depending on the value of $s$. 
This can be done by establishing a relation between \emph{animal kinds} and their \emph{specific sounds}.
One common approach is introducing an auxiliary intensional type of \emph{animal kinds} and intensional function mapping these \emph{kinds to their sounds}.
Type $\Kind$ (consisting of concepts $\simV{\Tp{\Cat}}$ and $\simV{\Tp{\Dog}}$) and function $\Co{soundOfKind}$ are declared as:
\begin{equation*}
    \VType \Kind <: \con := \{`(\Tp{\Cat}), `(\Tp{\Dog})\} \qquad\qquad \Co{soundOfKind} : \Kind \rightarrow \Sound
\end{equation*}
The following axioms define the mapping (the extension) of the function $\Co{soundOfKind}$: 
\begin{equation*}
    \Co{soundOfKind}(`(\Tp{\Cat})) = `(\Meow) \qquad\qquad \Co{soundOfKind}(`(\Tp{\Dog})) = `(\Bark)
\end{equation*}
Finally, the formula is guarded as:
\begin{equation}
	\label{eq:intensional-guarding}
	\forall a[\Animal]: \mS(a) \Leftrightarrow \exists k[\Kind] : \$(k)(a) \land \$(\Co{soundOfKind}(k))(a).
\end{equation}
The grounded version of this formula corresponds to the formula (\ref{eq:conjunction-guarding}), which is well-typed.

\section{Guarded order-sorted intensional logic}
\label{sec:results}

Formula (\ref{eq:intensional-guarding}) enhances the original statement (\ref{eq:conjunction-guarding}) by employing intensional constructs for guarding it.
However, achieving this required the introduction of a helper function relating kinds to their sounds, despite this information being present in the type of predicates $\Meow$ and $\Bark$. 
We address this issue by integrating guards (Section~\ref{sec:guarded-osl}) and intensional logic (Section~\ref{sec:intensional-osl}).
First, we demonstrate it on the running example.

\begin{example}
	\label{eg:compact-making-sound}
	Recall the formula (\ref{eq:conjunction-guarding}):
	\begin{gather*}
		\forall a[\Animal]: \mS(a) \Leftrightarrow (\Tp{\Cat}(a) \land \Meow(a)) \lor (\Tp{\Dog}(a) \land \Bark(a)).
	\end{gather*}
	Employing implicit guarding, the same can be expressed as:
	\begin{gather*}
		\forall a[\Animal]: \mS(a) \Leftrightarrow \Gand{\Meow(a)} \lor \Gand{\Bark(a)}.
	\end{gather*}
	Introducing quantification over $\Sound$ (sounds) results in:
	\begin{gather*}
		\forall a[\Animal]: \mS(a) \Leftrightarrow \exists s[\Sound] : \Gand{\$(s)(a)}.
	\end{gather*}
\end{example}

In this example, we began with the explicitly guarded formula and condensed it into a compact version using implicit guarding and quantification over concepts. Consequently, the resulting statement is well-typed.  Notably, variable $a$ is implicitly constrained to the appropriate type based on the predicate to which it is applied. 
This reflects the main goal of the paper, which is incorporating subtyping polymorphism in order-sorted logic.

\begin{example}
	The same methodology applies to formula (\ref{eq:implication-guarding}):
	\begin{gather*}
		\forall a[\Animal] : \bigl((\Tp{\Cat}(a) \Rightarrow \Meow(a)) \land 
		(\Tp{\Dog}(a) \Rightarrow \Bark(a))\bigr)	
	\end{gather*}
	Using implicit guarding on this formula we obtain: 
	$\forall a[\Animal] : \Gimp{\Meow(a)} \land \Gimp{\Bark(a)}$,
	and with quantifying over $\Sound$:
	$\forall a[\Animal] : \forall s[\Sound] : \Gimp{\$(s)(a)}$.
\end{example}

Previous examples demonstrate principles for expressing properties of objects depending on their type using \emph{guarded order-sorted intensional logic}.
The following proposition generalizes the modeling principles discussed so far.
\begin{proposition}
	\label{prop:intensional-guarding}
	Given \OSL vocabulary $\Od$:
	\begin{itemize}[leftmargin=2.5em]
		\item Let $p_1, \dots p_m$ be n-ary predicate symbols in $\Od$
		\item Let these symbols have type signature in $\Od$ as:
		\begin{gather*}
			\Omd(p_1) = \Td_{11} \times \dots \times \Td_{1n} \rightarrow \bool
            \qquad\dots\qquad
			\Omd(p_m) = \Td_{m1} \times \dots \times \Td_{mn} \rightarrow \bool
		\end{gather*}
		\item Let $\type{S}_1, \dots, \type{S}_n$ be types in $\Od$ such that:
		\begin{gather*}
			\Td_{11} <: \type{S}_1 \ \dots\ \Td_{m1} <: \type{S}_1 \qquad \dots \qquad \Td_{1n} <: \type{S}_n \ \dots\ \Td_{mn} <: \type{S}_n
		\end{gather*}
		\item Let $\type{P}$ be a type in $\Od$ such: $\type{P} <: \con := \{`(p_1), \dots, `(p_m)\}$.
		\item Let $p$ be a term of type $\type{P}$, and $t_i$ term of type $\type{S}_i$.
	\end{itemize}
	Then the following two expressions are well-typed:
	\begin{gather*}
		\Gand{\$(p)(t_1, \dots, t_n)} \qquad \Gimp{\$(p)(t_1, \dots, t_n)}
	\end{gather*}
\end{proposition}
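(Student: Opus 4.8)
The plan is to reduce the claim to the well-typedness of the \emph{grounded} version of each expression, which is the prescribed criterion for intensional formulae (cf.\ the grounding procedure introduced before Example~\ref{eg:grounding}), and then to discharge every ground instance with a single application of the guarding rules \emph{(G-\textsc{c})} and \emph{(G-\textsc{i})} from Definition~\ref{def:guarder-osl-vocabulary}. The only construct not directly covered by the typing rules is the intensional application $\$(p)(t_1, \dots, t_n)$, and the sole source of intensionality is the term $p$, whose type $\type{P}$ has the fixed extension $\{`(p_1), \dots, `(p_m)\}$. Grounding $p$ therefore amounts to substituting it by some concept $`(p_j)$, after which the intensional-elimination step rewrites $\$(`(p_j))(t_1, \dots, t_n)$ to the ordinary atom $p_j(t_1, \dots, t_n)$. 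The grounding criterion then reduces well-typedness of the whole expression to that of each of these ground instances, so it suffices to treat an arbitrary $j \in \{1, \dots, m\}$.

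First I would write out this ground instance of $\Gand{\$(p)(t_1, \dots, t_n)}$. Because $t_i$ now occurs as the $i$-th argument of the concrete predicate $p_j$, which expects type $\Td_{ji}$, the implicit guard of Definition~\ref{def:implicit-guards} resolves to $\Tp{\Td_{ji}}(t_i)$; this is exactly the point where the hypotheses $t_i : \type{S}_i$ and $\Td_{ji} <: \type{S}_i$ are consumed, instantiating respectively condition~(1) (context type) and condition~(3) (subtyping) of that definition. The ground instance is thus
\[
\Tp{\Td_{j1}}(t_1) \land \dots \land \Tp{\Td_{jn}}(t_n) \land p_j(t_1, \dots, t_n),
\]
which matches the conclusion of rule \emph{(G-\textsc{c})} with $\phi = p_j(t_1, \dots, t_n)$ and guard types $\Td_{j1}, \dots, \Td_{jn}$. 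Its two premises are discharged directly: the left premise $\TypDc{t_i : \univ}$ follows from $\TypDc{t_i : \type{S}_i}$ by \emph{(T-\textsc{sub})} (iterated along the subtype path to $\univ$ supplied by Proposition~\ref{prop:universe}); for the right premise $\Typ{\omega \cup \{t_1 : \Td_{j1}, \dots, t_n : \Td_{jn}\}}{p_j(t_1, \dots, t_n) : \bool}$, each $t_i$ has in the extended context precisely the type $\Td_{ji}$ demanded by $p_j$, so rule \emph{(T-\textsc{app})} applies. Rule \emph{(G-\textsc{c})} then yields well-typedness of the instance, and the implication case is verbatim the same with \emph{(G-\textsc{i})} replacing \emph{(G-\textsc{c})}, the two rules sharing identical premises.

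The main obstacle is conceptual rather than computational: the expected argument type of $\$(p)(\cdot)$ is not statically determined, since $p$ ranges over several predicate concepts whose signatures $\Td_{j1} \times \dots \times \Td_{jn} \rightarrow \bool$ may differ. This is precisely why the implicit-guard notation of Definition~\ref{def:implicit-guards} cannot be read off $\$(p)(t_1, \dots, t_n)$ syntactically and why the argument must route through grounding. The care required is to verify that, after substituting $`(p_j)$ for $p$, the guard instantiates to the \emph{signature of $p_j$} and not to the common supertypes $\type{S}_i$: the role of each $\type{S}_i$ is merely to give $t_i$ a single static type (so the term is well-formed prior to grounding), while the hypothesis $\Td_{ji} <: \type{S}_i$ certifies that each guard $\Tp{\Td_{ji}}$ is a legitimate down-cast of a term known to inhabit $\type{S}_i$. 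Once this alignment between the proposition's hypotheses and Definition~\ref{def:implicit-guards} is made explicit, the remaining derivations are the routine rule applications sketched above, uniform in $j$.
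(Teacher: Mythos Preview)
Your proposal is correct and follows essentially the same approach as the paper: instantiate $p$ to some concrete $p_k$ (the paper does this by saying ``term $p$ denotes some $\simV{p_k}$'', you frame it via the grounding criterion), unfold the implicit guard per Definition~\ref{def:implicit-guards}, use Proposition~\ref{prop:universe} with \emph{(T-\textsc{sub})} to type each $t_i$ at $\univ$, and discharge the inner atom via \emph{(G-\textsc{c})}/\emph{(G-\textsc{i})} in the extended context. If anything, you are more explicit than the paper about where the hypothesis $\Td_{ji} <: \type{S}_i$ is actually consumed (namely condition~(3) of Definition~\ref{def:implicit-guards}), a point the paper's proof leaves implicit.
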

\begin{proof}
	Term $p$ denotes some $\simV{p_k}$ from $\type{P}$ (recall that term $`(p_k)$ stands for value $\simV{p_k}$).
	The symbol $p_k$ is associated with a type term $\Td_{k1} \times \dots \times \Td_{kn} \rightarrow \bool$. 
	Accordingly, $\Gand{\$(p)(t_1, \dots, t_n)}$ stands for: $\Tp{\Td_{k1}}(t_1) \land \dots \land \Tp{\Td_{kn}}(t_n) \land p_k(t_1, \dots, t_n)$. Each of the terms $t_i$ is of type $\type{S}_i$ and hence also of type $\univ$ (follows from Proposition~\ref{prop:universe} and Definition~\ref{def:osl-well-typed} rule (T-\textsc{sub})), so each atom $\Tp{\Td_{ki}}(t_i)$ is well-typed (Definition~\ref{def:guarder-osl-vocabulary}). 
	Finally, according to Definition \ref{def:guarder-osl-vocabulary} rule (G-\textsc{c}), atom $p_k(t_1, \dots, t_n)$ is well-typed as the type of each $t_i$ is $\Td_{ki}$. 
	The proof for $\Gimp{\$(p)(t_1, \dots, t_n)}$ is similar.
\end{proof}
Patterns characterized in this proposition are essential for expressing logical statements containing subtyping polymorphism as demonstrated in previous examples.

An important observation is that the presented approach enables the compact formalization of statements involving subtyping. For instance, formula (\ref{eq:osl-guarding}) defining the predicate $makingSound$ in native \OSL, yields a formula whose length scales linearly with the number of animal kinds; i.e., the addition of another animal kind (e.g., mouse) would result in the formula growing in size. However, the logic presented in this paper is capable of expressing the same statements with formulae of constant length by utilizing the concepts introduced in Proposition \ref{prop:intensional-guarding}, as demonstrated in the examples above. Formally:
\begin{proposition}
	\label{prop:formula-length}
	Given the same environment as in Proposition \ref{prop:intensional-guarding}, the following formulae cannot be expressed in an \OSL formula with a length independent of the size of $\type{P}$:
	\begin{gather*}
		\exists s [\type{P}]:\Gand{\$(s)(t_1, \dots, t_n)} \qquad \forall s [\type{P}]: \Gimp{\$(s)(t_1, \dots, t_n)}
	\end{gather*}
\end{proposition}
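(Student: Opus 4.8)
The plan is to prove this as a succinctness (expressiveness) lower bound via a semantic relevance argument. First I would pin down the intended meaning of the two intensional formulae by appealing to the grounding semantics of Example~\ref{eg:grounding} together with Definition~\ref{def:implicit-guards}: since $\type{P} := \{`(p_1),\dots,`(p_m)\}$ is finite, quantifying over it and eliminating the intensional and implicit-guard constructs unfolds $\exists s[\type{P}]:\Gand{\$(s)(t_1,\dots,t_n)}$ into the disjunction
\[
  \bigvee_{k=1}^{m}\Bigl(\Tp{\Td_{k1}}(t_1)\land\dots\land\Tp{\Td_{kn}}(t_n)\land p_k(t_1,\dots,t_n)\Bigr),
\]
and dually $\forall s[\type{P}]:\Gimp{\$(s)(t_1,\dots,t_n)}$ into the conjunction $\bigwedge_{k=1}^{m}\bigl(\Tp{\Td_{k1}}(t_1)\land\dots\land\Tp{\Td_{kn}}(t_n)\Rightarrow p_k(t_1,\dots,t_n)\bigr)$. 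These unfoldings are exactly what any equivalent pure-\OSL formula must capture, so the claim reduces to showing that no \OSL formula of length independent of $m$ is logically equivalent to them.

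The core step is to argue that any \OSL formula $\psi$ logically equivalent to the disjunction above must contain a syntactic occurrence of \emph{every} predicate symbol $p_1,\dots,p_m$, which immediately forces $|\psi|\geq m=|\type{P}|$. To show $p_k$ must occur, I would build, for each fixed $k$, two admissible structures $\mathcal{A}$ and $\mathcal{A}'$ for $\Od$ that agree on the interpretation of every symbol except $p_k$ yet disagree on the truth of the target. Concretely, I would assign to $t_1,\dots,t_n$ domain elements $d_1,\dots,d_n$ with each $d_i$ in the denotation of $\Td_{ki}$, so that all guards $\Tp{\Td_{ki}}(t_i)$ of the $k$-th disjunct hold; then every other disjunct is killed by ensuring $p_j(d_1,\dots,d_n)$ is \emph{false} whenever its guard $\bigwedge_i\Tp{\Td_{ji}}(t_i)$ happens to hold; and finally I would let $p_k(d_1,\dots,d_n)$ be true in $\mathcal{A}$ and false in $\mathcal{A}'$. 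Then the target disjunction is true in $\mathcal{A}$ and false in $\mathcal{A}'$, whereas $\mathcal{A}$ and $\mathcal{A}'$ are indistinguishable by any formula not mentioning $p_k$; hence $\psi$ mentions $p_k$. Ranging over $k=1,\dots,m$ gives $m$ distinct occurrences and the bound. The $\forall$/$\Gimp$ case is symmetric: there I instead make every other conjunct vacuously true by setting $p_j(d_1,\dots,d_n)$ \emph{true} (or relying on a failing guard), and toggle $p_k$, so that the target conjunction again depends on $p_k$ precisely when the guards of its $k$-th conjunct hold.

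The main obstacle I anticipate is not the relevance argument itself but discharging its side conditions within the order-sorted discipline: I must guarantee that the two witnessing structures genuinely exist, i.e.\ that the denotations of the $\Td_{ki}$ are non-empty so suitable $d_i$ can be chosen, that the subtyping inclusions $\Td_{ki} <: \type{S}_i$ are honoured so the $t_i$ may legitimately take these values, and that toggling the single predicate $p_k$ while holding all other symbols fixed yields an admissible \OSL structure. A secondary point to make precise is that deactivating the ``other'' disjuncts or conjuncts is always possible even when a chosen $d_i$ simultaneously inhabits several of the types $\Td_{ji}$; this overlap is handled by controlling the \emph{predicate} interpretations $p_j$ where their guards are active rather than the guards themselves, so coincidences among type denotations cause no difficulty. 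Finally I would count occurrences syntactically, so that $m$ distinct required symbols contribute at least $m$ to the formula length, making the dependence on $|\type{P}|$ explicit and completing the proof.
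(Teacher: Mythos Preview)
Your proposal is correct and follows the same core idea as the paper: any \OSL rendering must mention every symbol $p_1,\dots,p_m$ in $\type{P}$, forcing length at least $m$. The paper's own proof is a single sentence asserting exactly this (``Rewriting these formulae into \OSL requires the mentioning of all symbols in $\type{P}$''), so your semantic relevance argument with toggled structures is a considerably more rigorous execution of the same approach rather than a different one.
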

\begin{proof}
	 Rewriting these formulae into \OSL requires the mentioning of all symbols in $\type{P}$.
\end{proof}

\section{Well-typedness in order-sorted intensional logic}
\label{sec:osl-i-typeing}

We argued in Section~\ref{sec:intensional-osl} that the well-typedness of formulae with intensional language constructs is not trivial. In this section, we elaborate on these issues and propose the foundations for the typing system suitable for the new language. 

Recall the methodology employed in formula~(\ref{eq:intensional-guarding}) to guard the formula in Example~\ref{eg:grounding}. 
We introduced a function $\Co{soundOfKind}$ to establish the connection between animal kinds and their specific sounds. 
It is important to note that the well-typedness of formula~(\ref{eq:intensional-guarding}) depends on the correct mapping of animal kinds to sounds by this function.
For example, if the function incorrectly maps $\simV{\Cat}$ to $\simV{\Bark}$, the formula~(\ref{eq:intensional-guarding}) would be ill-typed.
This underscores the dependence of well-typedness on the extensions (values) of types and functions.
However, the typing system from Definition~\ref{def:osl-well-typed} cannot account for such dependencies, as the type of function $\Co{soundOfKind}$ does not provide sufficient information. 

The first step towards a richer type system is the introduction of typing annotations that would clarify the typing of a concept. 
This idea is presented in \cite[Section 4]{DBLP:journals/corr/abs-2202-00898}.
For example, when quantifying over concepts, one has to provide information about the type of these concepts. 
\begin{equation*}
	\forall s \in \con[\Animal \rightarrow \bool] : \psi
\end{equation*}
In this statement variable $s$ ranges over concepts from the vocabulary which are of type $\Animal \rightarrow \bool$. 
In Example~\ref{eg:voc} these are $\Cat$, $\Dog$, $\Bark$, $\Meow$.
Similar information can be provided in the declaration of subtypes of concepts. 
For example, declaring a new type ``kind of animals'' (earlier introduced for fixing Example~\ref{eg:grounding}) requires annotating that each element of this type is a predicate over the ``animal'' type. 
Hence, the type ``kind of animals'' is a subtype of predicate concepts that are of type $\Animal \rightarrow \bool$.
\begin{equation*}
	\VType \Kind <: \con[\Animal \rightarrow \bool] := \{`(\Tp{\Cat}), `(\Tp{\Dog})\}
\end{equation*}

However, this approach fails to fully support guarded \OSL.
For example, no type can substitute $(?)$ in the following declaration of type $\Sound$ from Example~\ref{eg:voc-2}. This is because $\Meow$ and $\Bark$ are predicates over different types, $\Dog$ and $\Cat$ respectively.
\begin{equation*}
    \VType \Sound <: \con[ (?) \rightarrow \bool] := \{`(\Meow), `(\Bark)\}
\end{equation*}
Furthermore, essential for the well-typedness of formula~(\ref{eq:intensional-guarding}) is the fact that objects of type $\Kind$ are type predicates, and hence can serve for guarding. 
To make this distinction, two new types can be added: $\con^T$ to represent type concepts and $\con^S$ for function/predicate concepts.
We propose the following syntax for declaring $\Kind$ and $\Sound$:
\begin{gather*}
    \VType \Kind <: \con^T[\Animal] := \{`(\Cat), `(\Dog)\}\\
    \VType \Sound[t:\Kind] <: \con^S[ t \rightarrow \bool] := \{`(\Meow), `(\Bark)\}
\end{gather*}
Here, $\con^T[\Animal]$ stands for type concepts that are subtypes of type $\Animal$.
Notation $\Sound[t:\Kind] <: \con^S[ t \rightarrow \bool]$ expresses that type $\Sound$ is subtype of predicate concepts of type $t \rightarrow \bool$ where $t$ is of type $\Kind$ (making $\Sound$ dependent on the value of $t$).
Notice that this notation requires type checking for the declarations because types now have variables.
In this example, it is necessary to show that variable $t$ is of type $\con^T$.

Finally, the function $\Co{soundOfKind}$ can be declared in the following way:
\begin{equation*}
    \Co{soundOfKind} : k \rightarrow \Sound[k] \mid k : \Kind
\end{equation*}
Here, notation $\Sound[k]$ expresses the projection of type $\Sound$ to only these predicates that are over type $k$. 
This is essential for forming the connection between the types of domain and the range of the function.  
Informally, this declaration aims to express that function $\Co{soundOfKind}$ maps kinds $\Kind$ to the sounds of that kind $\Sound[k]$. 
In particular, based on the type information, sounds of kind $\Sound[k]$ can be any predicate with the typing signature $k \rightarrow \bool$.
Using this information, it is possible to conclude that formula~(\ref{eq:intensional-guarding}) is well-typed.
In particular, given that variable $a$ is of type $\Animal$ and $k$ of type $\Kind$ the following reasoning can be employed to make a judgment $(\$(k)(a) \land \$(\Co{soundOfKind}(k))(a)) : \bool$ (which is the challenging part of formula~(\ref{eq:intensional-guarding})):
\begin{itemize}
    \item $\$(k)(a)$ is well-typed as $k$ is some type symbol that is subtype of $\Animal$ and $a$ is of type $\Animal$, and per Definition~\ref{sec:guarded-osl} types can appear as predicates. 
    \item Since $k$ is a type-symbol, $\$(k)(a)$ can be used for guarding the other part of the conjunction (similar to the (G-\textsc{c}) rule from Definition~\ref{sec:guarded-osl}).
    \item $\$(\Co{soundOfKind}(k))(a)$ is well-typed because: 
        (\textit{i}) $\Co{soundOfKind(k)}$ is of type $k \rightarrow \bool$ 
        (\textit{ii}) Variable $a$ is of type $k$ thanks to the guard $\$(k)(a)$
        (\textit{iii}) $\$(\Co{soundOfKind}(k))(a)$ is of type $\bool$ as variable $a$ (of type $k$) is applied to some predicate of type $k \rightarrow \bool$.
\end{itemize}

To a certain extent, the typing system illustrated in this section resembles the idea of dependent types \cite[Chapter 6, Section 30.5]{pierce2002types}.
In type theory, a type is considered dependent if its definition relies on a value.
For example, a function that adds a new number to a list takes a number and a list of length $n$ as arguments and returns a list of length $n+1$.
Similarly, the function $\Co{soundOfKind}$ takes the intension of a type (a subtype of $\Animal$) as an argument and returns the intension of a unary predicate over that type.
Due to its extensiveness, formalizing such a typing system for order-sorted intensional logic and investigating its relation to dependent types remains within the scope of future work.


\section{Semantics of the language}
\label{sec:semantics}
The formal model semantics of the logic presented in this paper rely on a combination of order-sorted logic~\cite[Section 4.2]{DBLP:journals/ai/BeierleHPSS92} and intensional logic~\cite[Section 3.2]{DBLP:journals/corr/abs-2202-00898}.
Note that in all our examples, extensions of types and functions ranging over concepts are fixed (i.e., $\Sound$ contains exactly $\Bark$ and $\Meow$). This allows for grounding intensional language constructs and semantically reducing the logic to standard \OSL.
However, this section outlines the semantics of the order-sorted intensional logic.
First, we define the notion of structure, a value assignment to vocabulary symbols.
\begin{definition}
	A structure $\Sd$ over an \OSL vocabulary $\Od$ interprets all symbols $s$ in $\Od$ (denoted as $s^\Sd$) such that:
	\begin{enumerate}[leftmargin=2.5em]
		\item The value of each type symbol $\Td$ in $\OTd$ is a non-empty set $\Td^\Sd$.
		\begin{itemize}
            \item Type $\bool$ (boolean) is always assigned the set of truth values $\bool^\Sd = \{\true, \false\}$
		\item Type $\nat$ (natural numbers) is always assigned the set $\nat^\Sd = \{0,1,2,\dots \}$
			\item Type $\con$ (concepts) is assigned the set $\con^\Sd = \{\simV{s} \mid s \in \OSd \cup \OTd\}$. Here $\simV{s}$ is the atomic object formally representing the concept behind the symbol $s$.
		\end{itemize}		
		\item If type symbol $\Td$ is a direct subtype ($\OSTd$) of $\Td_1$, then $\Td^\Sd \subseteq \Td_1^\Sd$.
		\item Each symbol $s$ in $\OSd$ with type signature $\Omd(s) = \atypesign{T}{n}{\Td}$, is assigned a set $s^\Sd \subseteq \Td_1^\Sd \times \dots \times \Td_n^\Sd \times \Td^\Sd$ such that:
		\begin{itemize}
			\item for each tuple $(d_1,\dots,d_n) \in \Td_1^\Sd \times \dots \times \Td_n^\Sd$ there is an element $e \in \Td^\Sd$ such that $(d_1,\dots,d_n,e_1) \in s^\Sd$.
			\item for all tuples $(d_1,\dots,d_n,e_1), (d_1,\dots,d_n,e_2) \in s^\struct$, it holds that $e_1=e_2$.
            \item If $s$ is a type predicate $\Tp{\Td}$, then $s^\Sd = \{(d,\true) \mid d\in \Td\} \cup \{(d,\false) \mid d \in \univ \setminus \Td\}$.
		\end{itemize}
    If $s$ is a function symbol and $(d_1,\dots,d_n,e)\in s^\struct$, we write that $s^\struct(d_1,\dots,d_n)=e$.
	\end{enumerate}
\end{definition}

A common assumption is that each domain object has an identifier, a symbol that makes it possible to directly refer to that value from the theory.
With the notion of a structure formalized, we proceed with defining the value of an expression in a structure. 

\begin{definition}
	\label{def:semantics}
	Given vocabulary $\Od$, let $\alpha$ be an \OSL expression (over $\Od$), and $\Sd$ a structure interpreting all symbols in $\Od$.
	Further, let, for each free variable $x$ occurring in $\alpha$ as an argument of type $\Td$, structure $\Sd$ assign value $x^\struct \in \Td^\Sd$ (with $\Sd[x : d]$ we denote that structure $\struct$ is extended with assignment of value $d$ to variable $x$).   
	The \textbf{value of $\alpha$ in $\struct$}, denoted as $\Vd{\alpha}$, is defined by induction on the structure of $\alpha$: 
	\begingroup
	\allowdisplaybreaks
	\begin{align*}
		\Vd{x} &= x^\struct \quad\qquad \Vd{\Tr} = \true \quad\qquad \Vd{\Fa} = \false\\
		  \Vd{f(\tau_1,\dots,\tau_n)} &=f^\struct(\Vd{\tau_1}, \dots, \Vd{\tau_n}) \quad\qquad \Vd{p(\tau_1,\dots,\tau_n)} = p^\struct(\Vd{\tau_1}, \dots, \Vd{\tau_n})\\
		\Vd{\neg \phi} &=
		\begin{cases}
			\true, &\text{if } \Vd{\phi} = \false;\\
			\false, &\text{if } \Vd{\phi} = \true;
		\end{cases}\\
		\Vd{\phi_1 \lor \phi_2} &=
		\begin{cases}
			\true, &\text{if } \Vd{\phi_1} = \true \text{ or } \Vd{\phi_2} = \true;\\
			\false, &\text{if } \Vd{\phi_1} = \Vd{\phi_2} = \false;
		\end{cases}\\
		\Vd{\exists x [\Td] : \phi} &=
		\begin{cases}
			\true, &\text{if for some } d \in \Td^\Sd, \val{\phi}{\Sd[x : d]} = \true;\\
			\false, &\text{if for all } d \in \Td^\Sd, \val{\phi}{\Sd[x : d]} = \false;
		\end{cases}\\
		\Vd{`(s)} &= \simV{s}, \quad  \text{for } s \in \OSd \cup \OTd\\
		\Vd{\$(\tau)(\tau_1,\dots,\tau_n)} &= s^\struct(\Vd{\tau_1},\dots,\Vd{\tau_n}), \text{ for } s \in \OSd \cup \OTd \text{ such that } \Vd{\tau} = \simV{s}
    \end{align*}
	\endgroup
	An \OSL sentence $\psi$ over vocabulary $\Od$ is \textbf{satisfied} in a structure $\Sd$ (over $\Od$), denote as $\Sd \models \psi$, if and only if $\Vd{\psi} = \true$.
\end{definition}

As discussed in Section~\ref{sec:intensional-osl}, not every order-sorted intensional expression is meaningful (as illustrated by the example: $\Bark(\Tom)$). 
Consequently, attempting to define the value of such expressions is not meaningful. These expressions can be excluded by enhancing the typing system as explained in Section~\ref{sec:osl-i-typeing}. 

\section{Related work and discussion}
\label{sec:related-work}

Frame Logic (F-logic), introduced in \cite{DBLP:conf/sigmod/KiferL89}, is a knowledge representation language that combines conceptual modeling with object-oriented and frame-based languages. 
In this language, it is possible to use types as predicates which is sufficient for expressing formulae like (\ref{eq:conjunction-guarding}) and (\ref{eq:implication-guarding}).
Logic programming incorporating polymorphically order-sorted types is investigated in \cite{DBLP:conf/alp/Smolka88}. 
The Flora-2~\cite{DBLP:journals/jacm/KiferLW95} system combines F-logic and HiLog~\cite{DBLP:journals/jlp/ChenKW93}, resulting in an even more expressive language.
The key differences between these languages and guarded \OSL are:
    \textit{(i)} F-logic is mainly utilizing subtyping from the perspective of object-oriented paradigm while the focus of this paper is on a more general notion of types.
    \textit{(ii)} Results of these papers are related to \emph{parametric polymorphism}~\cite[Chapter 23]{pierce2002types} rather than subtyping polymorphism.
    \emph{Parametric (Ad hoc) polymorphism} includes generic types,  polymorphic  predicate and function symbols 
    and quantification over types.
    An example is $\mother$ and $\father$ functions, mapping animals of a certain kind to another animal of that same kind. Using parametric polymorphism the typing signature of this function can be expressed as ($\mother : \forall k \OSTd \Animal \ .\ k \rightarrow k$). Even though this notation strongly resembles the idea presented in Section~\ref{sec:osl-i-typeing} they are different. Here, variable $k$ ranges over types, while in the other example, this does not have to be the case. However, the dependent type approach with intensional logic can sometimes simulate parametric polymorphism. In this particular example: ($\mother : k \rightarrow k \mid k : \con^T[\Animal]$).
    \textit{(iii)} These languages lack intensional aspects. While HiLog allows for higher-order language constructs, it does not include concepts. This means that using functions such as $\Co{soundOfKind}$ to ``compose'' formulae is not possible. 
    In other words, one can see the intensional logic presented in this paper as a mechanism for expressing templates of formulae. This is because objects from the vocabulary are first-class citizens. This is not the case with the higher-order logic.
    \textit{(iv)} Implicit type guarding is not supported in these languages. 
    In particular, to the best of our knowledge, no other languages use such language constructs (except for our previous work~\cite{DBLP:conf/jelia/MarkovicBD23} where guards ensure the safe application of partial functions).
    However, this paper demonstrates the importance of implicit guarding and power coming from combining it with intensional logic.

The points \textit{(iii)} and \textit{(iv)} suggest that these languages may encounter similar problems to those concerning \OSL discussed in Section~\ref{sec:log-anaysis} and intensional logic from Section~\ref{sec:intensional-osl}.

On the other side, the scope of this paper is limited to subtyping polymorphism. 
Future research should explore how the approach presented in this paper relates to \emph{parametric (Ad hoc) polymorphism}.
In particular, it is worth investigating whether the two typing systems have the same expressive power.
Another research question that opens here is what if we perceive typed logic as a logic of partial predicates, what is then the relation between guarding presented in this paper and guarding that ensures arguments of a function are in its domain of definedness (our previous work~\cite{DBLP:conf/jelia/MarkovicBD23}).

Similar to the approach demonstrated in formula (\ref{eq:intensional-guarding}), it is possible to define higher-order functions in HiLog to map propositions to propositions, thereby achieving similar outcomes. 
However, this approach carries the same issues as the one with intensional logic.
Namely, it requires introduction of new functions and predicates representing the typing relation between different concepts which is redundant as this information is present in the typing signature of these concepts. 
This issue was discussed in Section~\ref{sec:results}.  
Similar issues apply to many imperative programming languages, such as Python, where \emph{dynamic function invocation} can yield similar results but with the price of introducing redundant type information. 
Dynamic function invocation allows one to store names of functions in variables and then invoke these functions by using the variable.

In conclusion, many declarative (logic-based) and imperative languages can achieve similar results as presented in this paper. However, mainly due to the lack of implicit guarding and intensional aspects of the language, these languages do not support the subtyping discussed in this paper as a native language construct. To the best of our knowledge, there are no such knowledge representation languages.

\section{Conclusion}
\label{sec:conclusion}
In this paper, we addressed the challenge of subtyping polymorphism within order-sorted logic. Through our investigation, we identified two essential requirements: intensional logic and implicit guarding with typing assertions. Consequently, we introduced \emph{guarded order-sorted intensional logic} and demonstrated its effectiveness for this task.

The main contributions of this paper are: \textit{(i)} implicit guarding, language constructs introduced in Definition~\ref{def:implicit-guards} allowing conditioning of types for terms based on their application; \textit{(ii)} combining implicit guarding and intensional logic (i.e., quantification over concepts) for expressing subtyping polymorphism, as elaborated in Propositions \ref{prop:intensional-guarding} and \ref{prop:formula-length}.
Additionally, this paper opens two new research topics: the well-typedness conditions of \emph{guarded order-sorted intensional logic} and its relation to dependent types (see Section~\ref{sec:osl-i-typeing}), and second, the relation of order-sorted logic as presented in this work and logic of partial functions (see Section~\ref{sec:related-work}).

\section*{Acknowledgments}
Special thanks to Maurice Bruynooghe for his thorough reviews of this paper. 
Thanks to Robbe Van den Eede and Linde Vanbesien for valuable discussions. 
Thanks to Tobias Reinhard and Justus Fasse for their insightful reviews of the early versions of this paper.

\nocite{*}
\bibliographystyle{eptcs}
\bibliography{biblio}
\end{document}